\documentclass[runningheads]{llncs}

\usepackage{multirow}
\usepackage[section]{placeins}
\usepackage{xcolor}
\newcommand{\gray}[1]{\textcolor{gray}{#1}}
\usepackage{accv}

\usepackage{accvabbrv}

\usepackage{graphicx}
\usepackage{booktabs}

\usepackage[accsupp]{axessibility}  

\usepackage[pagebackref,breaklinks,colorlinks,citecolor=accvblue]{hyperref}

\usepackage{orcidlink}

\begin{document}

\title{Erasing Without Collateral Damage: Precise Concept Removal in Diffusion Models} 

\titlerunning{Abbreviated paper title}

\author{Parth Upman\inst{1} \and
Nishita Jain \inst{2} \and
Shreyank N Gowda\inst{1}}

\authorrunning{Upman et al.}

\institute{School of Computer Science, University of Nottingham, Nottingham, NG8 1BB, UK \and
Department of Computing, Imperial College London, London, SW7 2AZ, UK 
\email{\{psxpu1,shreyank.narayanagowda\}@nottingham.ac.uk}}

\maketitle

\begin{abstract}
Training-free concept erasure is an attractive mechanism for controlling text-to-image diffusion models, but precise erasure often comes at the cost of damaging semantically related non-target concepts. Existing value-space methods remove the component of each cross-attention value along the target concept direction, implicitly treating target identity and shared visual structure as the same signal. We argue that this is the source of much of the collateral damage in prior preservation. We introduce \textbf{CARE}, a closed-form concept erasure operator that replaces the raw target direction with a kept-subspace-aware direction computed from a small bank of retained concept anchors. The resulting edit is applied directly in cross-attention value space, requires no model fine-tuning, and adds only a negligible offline computation. A single shrinkage parameter controls the erase–preserve trade-off. We further show that the operator admits a minimum-disturbance interpretation and, in its projection form, leaves the kept subspace invariant. Experiments under the standard concept-erasure protocol show that our method preserves non-target concepts more faithfully while maintaining competitive erasure across instance, style, and celebrity concepts. 
Code: \url{https://github.com/parthupman/care}

  \keywords{Concept Erasure \and Diffusion Models \and Generative AI Safety}
\end{abstract}

\section{Introduction}

Text-to-image diffusion models have become a practical interface for visual content creation, generating high-quality images from natural language prompts across objects, scenes, styles, identities, and abstract concepts~\cite{ho2020denoising,dhariwal2021diffusion,rombach2022highresolution,podell2024sdxl}. Their success relies partly on large-scale image-text pretraining and web-scale data collection~\cite{radford2021learning,schuhmann2022laion}, but this scale also creates a deployment problem: models trained on weakly curated corpora can reproduce copyrighted characters, distinctive artistic styles, public figures, unsafe content, or other concepts that a model owner may wish to restrict~\cite{birhane2021multimodal,carlini2023extracting,schramowski2023safe,mustafa2026low}. Since retraining a released model from scratch for every newly identified concept is rarely practical, \emph{concept erasure} has become an important problem for controllable generative modelling: remove a target concept while preserving the rest of the model's visual knowledge.

The central difficulty is that concept erasure is not simply about weakening generation. A useful method must be both effective and specific: prompts containing the target concept should no longer render it, while prompts for non-target concepts should remain faithful to the original model. These goals are naturally in tension. Removing a cartoon character should not damage other cartoon characters; erasing an artist style should not flatten related painterly styles; suppressing one public figure should not degrade face generation more broadly. The challenge is therefore to erase \emph{without collateral damage}.

Existing erasure methods approach this trade-off in different ways. Training-based methods fine-tune the diffusion model or attach lightweight modules to forget the target concept~\cite{gandikota2023erasing,kumari2023ablating,zhang2024forgetmenot,lyu2024one,lu2024mace}. They can achieve strong erasure, but usually require per-concept optimization and preserve the prior only through regularization, anchor losses, or adapters. Closed-form weight-editing methods avoid iterative fine-tuning by solving for changes to cross-attention projections~\cite{gandikota2024unified,gong2024reliable}, but the edit is still baked into model weights and typically depends on an explicitly specified preserve set. Training-free methods intervene during generation through classifier-free guidance, negative prompting, safety guidance, or prompt/embedding manipulation~\cite{ho2022classifierfree,schramowski2023safe,yoon2025safree}. These are attractive for deployment, but can be too coarse to separate a target concept from nearby non-target concepts that share visual structure.

A particularly promising direction is to edit internal representations rather than weights or prompts. Cross-attention layers mediate how text tokens influence image latents in latent diffusion models~\cite{rombach2022highresolution}. AdaVD shows that intervening in the value space of these layers provides a fast and training-free surface for concept erasure: it removes the component of each cross-attention value along the recorded target value direction, with an adaptive gate controlling when the edit is applied~\cite{wang2025precise}. However, the raw target direction need not represent only the concept to be removed. It can also contain structure shared with concepts that should be preserved: shape and texture statistics shared by characters, colour and brushstroke patterns shared by artists, or the generic face manifold shared by identities. Removing the full target direction therefore risks removing more than the target, which is precisely where prior preservation fails.

We address this issue with \textbf{CARE}, short for \textbf{Covariance-Aware Retained-subspace Erasure}. CARE keeps the same training-free value-space intervention surface as AdaVD, but changes the direction of erasure. Instead of erasing along the raw target value, CARE builds a small bank of retained concept anchors and uses their covariance structure to compute a kept-subspace-aware direction. This direction down-weights components of the target that lie in high-variance retained directions, while preserving components that distinguish the target from retained concepts. The resulting operator is closed-form, applied directly to cross-attention values, requires no model fine-tuning, and adds only a small offline computation.

The shrinkage parameter in CARE exposes a continuous erase--preserve trade-off. In one limit, the retained covariance becomes negligible and CARE recovers standard value-space erasure. In the other, the operator approaches a projection that removes the target component outside the retained subspace. This connects concept erasure to minimum-disturbance projection and covariance-weighted discrimination: the CARE direction can be viewed as a Fisher/Mahalanobis-style discriminative direction under a retained-concept covariance metric~\cite{fisher1936use,mahalanobis1936generalised,ledoit2004wellconditioned}. Computationally, it is obtained through a low-rank Woodbury solve, avoiding any large matrix inverse and preserving the training-free character of the method~\cite{woodbury1950inverting,boyd2004convex}.

In summary, CARE reframes concept erasure as choosing the right direction to forget. Rather than asking only how strongly to remove a target, we ask which part of the target direction should be removed at all. Our contributions are:
\begin{itemize}
    \item We identify shared target-retain structure as a source of collateral damage in value-space concept erasure, and introduce a covariance-aware retained-subspace direction for training-free concept removal.
    \item We derive CARE as a closed-form erase--preserve operator with a single shrinkage parameter, recovering standard value-space erasure as a limiting case and yielding a kept-subspace invariance property in its projection form.
    \item We evaluate CARE under the standard concept-erasure protocol across instance, style, and celebrity removal, showing improved non-target preservation with matched or competitive erasure.
\end{itemize}

\section{Related Work}

\textbf{Text-to-image diffusion and concept control.}
Modern text-to-image systems build on diffusion models~\cite{ho2020denoising,dhariwal2021diffusion,nichol2021improved,song2021scorebased} and latent diffusion architectures that make high-resolution generation practical~\cite{rombach2022highresolution,podell2024sdxl}. Large-scale image-text pretraining enables open-ended prompt following and broad visual coverage~\cite{radford2021learning,schuhmann2022laion,saharia2022photorealistic,ramesh2022hierarchical}, but also exposes models to copyrighted, offensive, private, and otherwise sensitive visual concepts~\cite{birhane2021multimodal,carlini2023extracting}. Prompt-level filters and safety guidance can help, but are brittle to prompt modifications, jailbreaks, or adversarially chosen inputs~\cite{schramowski2023safe,mustafa2026low,pham2024circumventing,tsai2024ring}. This motivates concept-erasure methods that intervene in the generative process itself, rather than relying only on input or output filtering.

\noindent\textbf{Concept erasure and model editing.}
A major line of work edits model parameters so that a target concept is forgotten. Erased Stable Diffusion fine-tunes the model with negative guidance~\cite{gandikota2023erasing}, Concept Ablation maps the target toward a generic anchor~\cite{kumari2023ablating}, and Forget-Me-Not suppresses target attention maps during fine-tuning~\cite{zhang2024forgetmenot}. More recent methods improve scalability and locality through lightweight or concept-specific modules, including SPM~\cite{lyu2024one}, MACE~\cite{lu2024mace}, and Receler~\cite{huang2024receler}; benchmarks such as UnlearnCanvas further emphasize evaluation of both forgetting and preservation~\cite{zhang2024unlearncanvas}. These methods can achieve strong erasure, but usually require per-concept optimization, additional parameters, or edited checkpoints, with preservation encouraged through regularization or anchor losses rather than enforced by the edit structure.

\noindent\textbf{Closed-form and training-free interventions.}
To reduce fine-tuning cost, closed-form methods edit cross-attention projections directly. Unified Concept Editing solves for weight changes that map targets away while preserving specified concepts~\cite{gandikota2024unified}, and RECE studies reliable and efficient concept erasure in text-to-image models~\cite{gong2024reliable}. These approaches avoid iterative fine-tuning, but still modify model weights and depend on an explicit preserve set or penalty. Training-free generation-time methods instead intervene during sampling, for example through classifier-free guidance, safety guidance, negative prompting, SuppressEOT, or adaptive guards~\cite{ho2022classifierfree,schramowski2023safe,li2024get,yoon2025safree}. They are attractive for deployment, but guidance- or embedding-space interventions can be too coarse to separate a target concept from nearby non-target concepts that share visual structure.

\noindent\textbf{Cross-attention, value space, and retained subspaces.}
Cross-attention controls how text conditions image latents in diffusion models, building on transformer attention~\cite{vaswani2017attention,rombach2022highresolution}. Image-editing methods have exploited this structure by manipulating attention maps or text-conditioning signals during generation~\cite{hertz2022prompt,mokady2023null,avrahami2022blended,chefer2023attend}. Closest to our work is AdaVD, which observes that cross-attention values act as a visual ``what'' pathway and erases a target by removing the component of each value vector along the recorded target direction~\cite{wang2025precise}. CARE keeps this fast, closed-form, training-free value-space intervention, but changes the direction of erasure: instead of using the raw target value, it computes a covariance-aware direction from retained anchors, down-weighting components important for concepts that should remain intact.

\noindent\textbf{Covariance-aware directions.}
Our formulation is related to classical discriminative projection and covariance whitening. Fisher's linear discriminant and Mahalanobis distance define directions using covariance information~\cite{fisher1936use,mahalanobis1936generalised}; shrinkage covariance improves conditioning in high dimensions~\cite{ledoit2004wellconditioned}; and the Woodbury identity enables efficient low-rank inverse computation~\cite{woodbury1950inverting}. Subspace-constrained editing has also been used to preserve behaviour during model editing, for example in recent language-model editing work~\cite{fang2025alphaedit}. We use these tools not as a classifier or weight-editing constraint, but as a closed-form erasure direction: retained anchors define the covariance structure to protect, and the target is removed relative to that structure.

\section{Method}

We study training-free concept erasure for a frozen text-to-image diffusion model. Given a target concept $c$, the goal is to prevent the model from rendering $c$ while preserving its ability to generate non-target concepts. We focus on inference-time edits inside the cross-attention layers of the denoising U-Net, where text tokens influence the visual content written into the latent image representation. Our method, \textbf{CARE} (\textbf{C}ovariance-\textbf{A}ware \textbf{R}etained-subspace \textbf{E}rasure), keeps the same intervention surface as value-space erasure, but replaces the raw target direction with a retained-subspace-aware direction computed from a small bank of kept concepts. Figure~\ref{fig:care_overview} gives an overview.

\begin{figure}[!t]
    \centering
    \includegraphics[width=\textwidth]{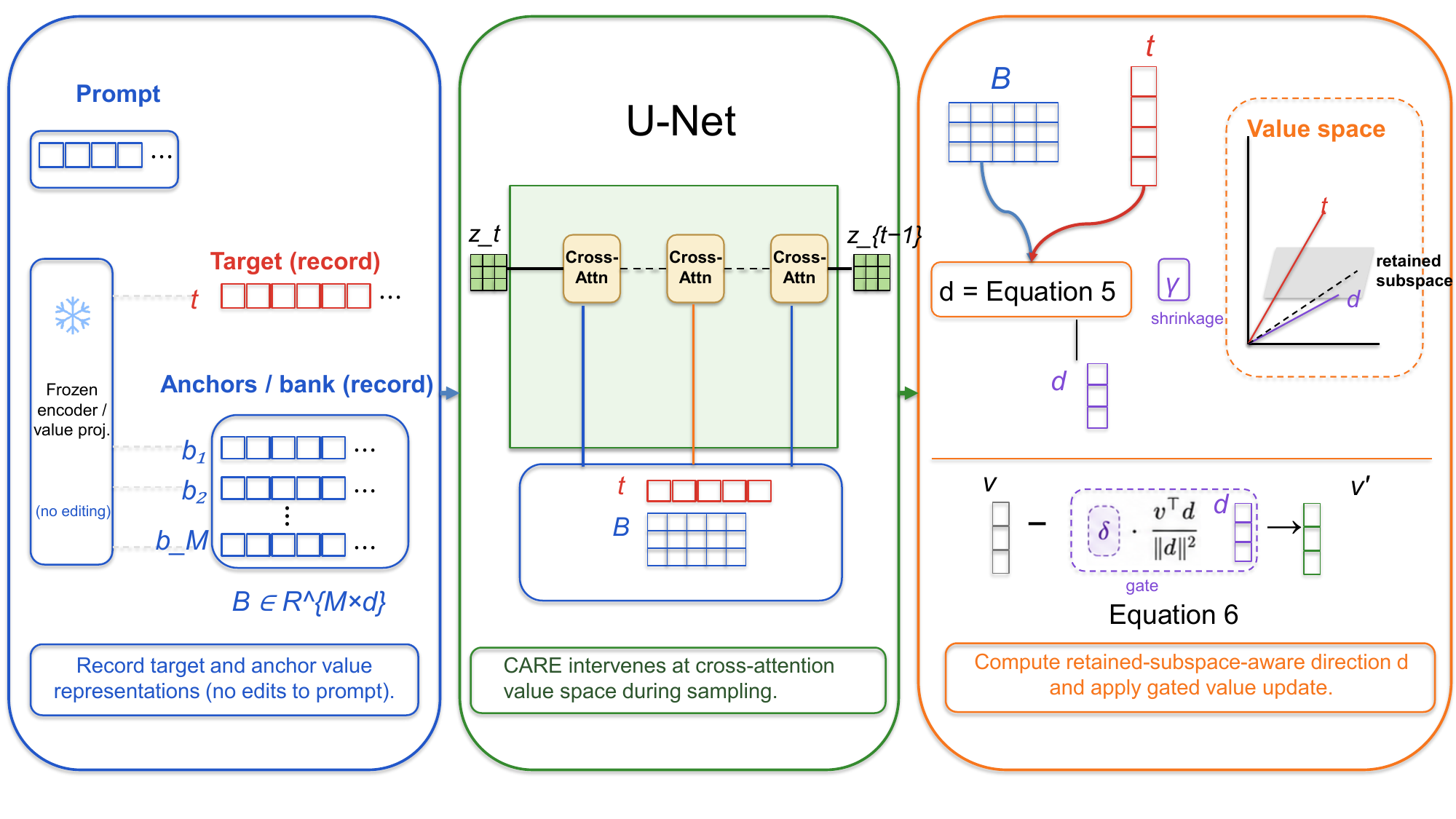}
    \caption{
    Overview of CARE. Given a target concept and a small bank of retained anchors, CARE records their value-space representations from the frozen text encoder and value projections, without editing the prompt or model weights. During sampling, CARE intervenes inside the cross-attention value space of the diffusion U-Net. The retained-anchor bank defines a covariance-aware direction 
    $d = \left(\frac{1}{M}B^\top B + \gamma I\right)^{-1}t$, 
    which down-weights components shared with retained concepts. The final value update applies a gated subtraction along this direction, suppressing the target-specific component while reducing collateral changes to retained concepts.
    }
    \label{fig:care_overview}
\end{figure}

\subsection{Value-Space Erasure Preliminaries}

Latent diffusion models inject text conditioning through cross-attention layers~\cite{rombach2022highresolution}. For a given cross-attention layer, let $v_j \in \mathbb{R}^{D}$ denote the value vector corresponding to token position $j$ in the prompt. Existing value-space erasure records a target value $t_j \in \mathbb{R}^{D}$ for concept $c$ and removes the component of $v_j$ along $t_j$:
\begin{equation}
    v^{\mathrm{erase}}_j
    =
    v_j
    -
    \delta\!\left(\cos(t_j, v_j)\right)
    \frac{\langle t_j, v_j\rangle}{\langle t_j, t_j\rangle}
    t_j ,
    \label{eq:raw_value_erasure}
\end{equation}
where $\delta(\cdot)$ is a token-wise gate that activates erasure when the prompt value is sufficiently aligned with the target direction:
\begin{equation}
    \delta(x) =
    \frac{s}{1 + \exp\{-p(x-\epsilon)\}} .
    \label{eq:gate}
\end{equation}
This edit is fast and training-free, but it treats $t_j$ as if it contains only the target concept. In practice, $t_j$ can also contain structure shared with concepts that should be preserved, such as cartoon structure, artistic texture, or the generic face manifold. CARE addresses this by changing the erasure direction.

\subsection{Retained Anchor Bank}

For each target concept, CARE constructs a small bank of retained concept anchors. These anchors represent concepts that should remain stable after erasure. For a given cross-attention layer and token position $j$, we record the value vectors of $M$ retained anchors and collect them as
\begin{equation}
    B_j =
    \begin{bmatrix}
        b_{j,1}^{\top} \\
        b_{j,2}^{\top} \\
        \vdots \\
        b_{j,M}^{\top}
    \end{bmatrix}
    \in \mathbb{R}^{M \times D},
    \label{eq:anchor_bank}
\end{equation}
where $b_{j,m}$ is the value vector of the $m$-th retained anchor at token position $j$. The target value $t_j$ and retained-anchor values are recorded offline using the frozen text encoder and value projections. No model weights are updated.

\subsection{Covariance-Aware Erasure Direction}

The anchor bank describes directions in value space that are important for retained concepts. CARE uses this structure to compute an erasure direction that remains aligned with the target while down-weighting components shared with retained concepts. We define the retained-anchor covariance as
\begin{equation}
    \Sigma_{R,j}
    =
    \frac{1}{M} B_j^{\top} B_j + \gamma I_D ,
    \label{eq:retained_covariance}
\end{equation}
where $\gamma > 0$ is a shrinkage parameter. CARE replaces the raw target direction $t_j$ with
\begin{equation}
    d_j = \Sigma_{R,j}^{-1} t_j .
    \label{eq:care_direction}
\end{equation}
This direction suppresses components of $t_j$ that lie in high-variance retained-anchor directions, while preserving components that distinguish the target from the retained anchors. Direct inversion of $\Sigma_{R,j}$ is unnecessary: since $M \ll D$, $d_j$ is computed efficiently with a low-rank Woodbury solve over an $M \times M$ matrix. The full derivation is provided in the supplementary material.

\subsection{CARE Value Update}

Given the covariance-aware direction $d_j$, CARE applies the same gated rank-one removal form as value-space erasure:
\begin{equation}
    v^{\mathrm{CARE}}_j
    =
    v_j
    -
    \delta\!\left(\cos(t_j, v_j)\right)
    \frac{\langle d_j, v_j\rangle}{\langle d_j, d_j\rangle}
    d_j .
    \label{eq:care_update}
\end{equation}
The gate remains based on the raw target similarity $\cos(t_j,v_j)$, preserving compatibility with prior value-space erasure. The selectivity of CARE comes from the erasure coefficient $\langle d_j, v_j\rangle$: if a prompt value does not contain the target-specific component, its projection onto $d_j$ is small, and the update has little effect. The update is applied independently at each edited cross-attention layer and token position, except for the first token position, which is skipped following prior value-space erasure.

\subsection{Erase--Preserve Trade-off}

The shrinkage parameter $\gamma$ controls the erase--preserve trade-off. When $\gamma \rightarrow \infty$, the covariance term becomes negligible and $\Sigma_{R,j}^{-1}t_j \propto t_j$, so CARE recovers standard raw target-direction erasure as a limiting case. As $\gamma$ decreases, retained-anchor covariance has a stronger effect and directions important for retained concepts are increasingly down-weighted. Thus large $\gamma$ favours erasure strength, while smaller $\gamma$ favours preservation of retained structure.

CARE is training-free. It does not update the U-Net, the text encoder, or any cross-attention projection weights. Its only additional cost over raw value-space erasure is the offline construction of $d_j$, after which inference uses the same type of gated rank-one subtraction. Additional details, including the Woodbury computation, the projection-form invariance property, and the multi-concept extension, are provided in the supplementary material.

\section{Experiments}

We evaluate CARE on training-free concept erasure for text-to-image diffusion models. The goal is to remove a target concept while preserving the model's ability to generate non-target concepts. We follow the evaluation protocol used by prior concept-erasure methods, reporting CLIP score (CS) for erased concepts and FID for retained concepts. Lower CS indicates stronger erasure of the target concept, while lower FID indicates better preservation of non-target concepts.

\subsection{Experimental Setup}

\textbf{Model and sampling.}
All experiments are conducted on Stable Diffusion v1.4 using the DPM-Solver sampler with 30 denoising steps and classifier-free guidance scale 7.5, following the standard AdaVD protocol~\cite{wang2025precise}. CARE is applied at inference time only. It does not update the text encoder, the UNet, or any cross-attention projection weights.

\noindent\textbf{Evaluation protocol.}
We evaluate three categories of concept erasure: instances, art styles, and celebrities. For instance erasure, we evaluate single- and multi-concept removal using Snoopy, Mickey, and Spongebob as erased concepts, with Pikachu, Dog, and Legislator used as non-target probes. For art-style erasure, we erase Van Gogh, Picasso, and Monet, and evaluate preservation on the remaining styles and held-out artists. For celebrity erasure, we erase Bruce Lee, Marilyn Monroe, and Melania Trump, and evaluate preservation on non-target identities. Unless otherwise stated, all main results use the same number of templates and samples as the standard protocol. The ablation studies use reduced-scope runs with 15 templates, NS=10, seed 0.

\noindent\textbf{Baselines and metrics.}
We compare against training-based erasure methods, including Concept Ablation~\cite{kumari2023ablating}, MACE~\cite{lu2024mace}, and SPM~\cite{lyu2024one}, and training-free methods, including negative prompting, SLD~\cite{schramowski2023safe}, and AdaVD~\cite{wang2025precise}. Baseline rows are transcribed from the AdaVD evaluation protocol, while CARE rows are produced under the same evaluation surface. In each table, target concepts are evaluated by CS$\downarrow$, and non-target concepts are evaluated by FID$\downarrow$.

\subsection{Instance Concept Erasure}

\begin{table}[!t]
\centering
\caption{Quantitative comparison of single- and multi-instance concept erasure. Each target concept is evaluated by CLIP score (CS, lower is better), while non-target concepts are evaluated by FID (lower is better). Best results are shown in bold and second-best results are underlined.}
\label{tab:instance_erasure}
\begin{tabular}{@{}l|cccccc@{}}
\toprule
\textbf{Concept} 
& \textbf{Snoopy} 
& \textbf{Mickey} 
& \textbf{Spongebob} 
& \textbf{Pikachu} 
& \textbf{Dog} 
& \textbf{Legislator} \\
\midrule
SD v1.4 
& 28.51 & 26.57 & 27.43 & -- & -- & -- \\
\midrule

\multicolumn{7}{@{}l}{\emph{Erase Snoopy}} \\
\cmidrule(lr){1-7}
& CS$\downarrow$ & FID$\downarrow$ & FID$\downarrow$ & FID$\downarrow$ & FID$\downarrow$ & FID$\downarrow$ \\
ConAbl~\cite{kumari2023ablating} 
& 25.38 & 38.44 & 41.59 & 29.68 & 27.76 & 27.36 \\
MACE~\cite{lu2024mace} 
& 20.78 & 118.01 & 111.90 & 81.99 & 43.27 & 65.97 \\
SPM~\cite{lyu2024one} 
& 23.89 & 33.06 & 34.70 & 23.89 & 19.61 & 18.26 \\
NP~\cite{schramowski2023safe} 
& 23.66 & 59.58 & 78.74 & 52.37 & 67.51 & 55.22 \\
SLD~\cite{schramowski2023safe} 
& 27.84 & 48.12 & 55.36 & 38.74 & 41.95 & 49.08 \\
AdaVD~\cite{wang2025precise} 
& \underline{20.28} & \underline{5.72} & \underline{8.56} & \underline{5.79} & \underline{2.32} & \underline{6.07} \\
\textbf{Ours $(\gamma = 0.5)$ }
& \textbf{19.42} & \textbf{4.43} & \textbf{7.42} & \textbf{4.74} & \textbf{1.95} & \textbf{5.32} \\
\midrule

\multicolumn{7}{@{}l}{\emph{Erase Snoopy and Mickey}} \\
\cmidrule(lr){1-7}
& CS$\downarrow$ & CS$\downarrow$ & FID$\downarrow$ & FID$\downarrow$ & FID$\downarrow$ & FID$\downarrow$ \\
ConAbl~\cite{kumari2023ablating} 
& 24.26 & 24.08 & 46.32 & 39.63 & 30.57 & 27.49 \\
MACE~\cite{lu2024mace} 
& 20.74 & \underline{20.71} & 51.49 & 110.67 & 52.07 & 77.13 \\
SPM~\cite{lyu2024one} 
& 23.16 & 22.81 & 41.58 & 31.77 & 21.96 & 23.69 \\
NP~\cite{schramowski2023safe} 
& 23.59 & 24.85 & 81.41 & 50.10 & 65.93 & 58.88 \\
SLD~\cite{schramowski2023safe} 
& 27.76 & 26.74 & 54.59 & 39.24 & 41.62 & 50.13 \\
AdaVD~\cite{wang2025precise} 
& \underline{20.29} & \textbf{19.93} & \underline{9.34} & \underline{5.84} & \underline{2.41} & \underline{6.43} \\
\textbf{Ours $(\gamma = 0.5)$ } 
& \textbf{19.45} & 22.55 & \textbf{7.47} & \textbf{4.98} & \textbf{2.18} & \textbf{5.34} \\
\midrule

\multicolumn{7}{@{}l}{\emph{Erase Snoopy and Mickey and Spongebob}} \\
\cmidrule(lr){1-7}
& CS$\downarrow$ & CS$\downarrow$ & CS$\downarrow$ & FID$\downarrow$ & FID$\downarrow$ & FID$\downarrow$ \\
ConAbl~\cite{kumari2023ablating} 
& 23.94 & 23.64 & 25.04 & 51.20 & 31.59 & 30.03 \\
MACE~\cite{lu2024mace} 
& 20.48 & \underline{20.50} & 21.59 & 99.68 & 47.46 & 70.38 \\
SPM~\cite{lyu2024one} 
& 22.81 & 22.35 & 20.82 & 39.83 & 22.68 & 25.31 \\
NP~\cite{schramowski2023safe} 
& 24.29 & 24.76 & 25.31 & 64.75 & 65.10 & 59.33 \\
SLD~\cite{schramowski2023safe} 
& 27.84 & 26.71 & 27.60 & \underline{39.41} & 42.32 & 49.88 \\
AdaVD~\cite{wang2025precise} 
& \textbf{19.39} & \textbf{19.73} & \underline{20.34} & \textbf{6.86} & \underline{2.79} & \underline{7.26} \\

\textbf{Ours $(\gamma = 0.5)$ }
& \underline{19.45} & 22.56 & \textbf{18.40} & \textbf{6.86} & \textbf{2.27} & \textbf{5.55} \\
\bottomrule
\end{tabular}%

\end{table}

Table~\ref{tab:instance_erasure} reports single- and multi-instance erasure. In the single-concept setting, CARE gives a clear improvement over AdaVD. When erasing Snoopy, CARE reduces the target CS from 20.28 to 19.42. More importantly, it improves preservation for every retained concept: Mickey FID decreases from 5.72 to 4.43, Spongebob from 8.56 to 7.42, Pikachu from 5.79 to 4.74, Dog from 2.32 to 1.95, and Legislator from 6.07 to 5.32. CARE strengthens erasure while reducing collateral changes across all retained probes. The multi-concept setting is more nuanced. When erasing Snoopy and Mickey jointly, CARE improves Snoopy erasure from 20.29 to 19.45, but Mickey erasure weakens from 19.93 to 22.55. At the same time, preservation improves consistently: Spongebob FID decreases from 9.34 to 7.47, Pikachu from 5.84 to 4.98, Dog from 2.41 to 2.18, and Legislator from 6.43 to 5.34. When erasing Snoopy, Mickey, and Spongebob jointly, CARE improves Spongebob erasure substantially, reducing CS from 20.34 to 18.40, and improves retained Dog and Legislator FID from 2.79 to 2.27 and 7.26 to 5.55 respectively. However, Mickey remains harder to erase in the multi-target setting.

These results show the intended behaviour of CARE. It is not designed to maximize erasure strength at any cost. Instead, it shifts the operating point toward better preservation, often with matched or improved erasure. This is especially important in concept-erasure settings where the target concept is visually close to concepts that should remain intact.


\subsection{Art-Style Erasure}

Table~\ref{tab:style_erasure} reports art-style erasure. This is the strongest setting for CARE. When erasing Van Gogh, CARE achieves the best target CS, reducing it from AdaVD's 24.87 to 23.63. It also improves preservation for Picasso, Monet, and Caravaggio, reducing FID from 6.82 to 5.86, 2.66 to 2.44, and 6.84 to 4.80 respectively. The only exception is Andy Warhol, where CARE obtains 8.59 compared with AdaVD's 8.36, a small difference.

For Picasso erasure, CARE matches AdaVD's target CS at 26.99, while improving all retained-style FIDs. Van Gogh FID decreases from 5.49 to 5.18, Monet from 2.33 to 1.69, Andy Warhol from 9.38 to 7.53, and Caravaggio from 7.05 to 4.36. Although negative prompting obtains a lower Picasso CS, it severely damages non-target styles, with retained FIDs above 90 in several columns. CARE therefore gives a much better erase--preserve operating point.

\begin{table}[!t]
\centering
\caption{Quantitative comparison of art-style concept erasure. Target concepts are evaluated by CLIP score (CS, lower is better), while non-target concepts are evaluated by FID (lower is better). Best results are shown in bold and second-best results are underlined.}
\label{tab:style_erasure}
\begin{tabular}{@{}l|ccccc@{}}
\toprule
\textbf{Concept}
& \textbf{Van Gogh}
& \textbf{Picasso}
& \textbf{Monet}
& \textbf{Andy Warhol}
& \textbf{Caravaggio} \\
\midrule
SD v1.4
& 29.21 & 29.06 & 29.02 & -- & -- \\
\midrule

\multicolumn{6}{@{}l}{\emph{Erase Van Gogh}} \\
\cmidrule(lr){1-6}
& CS$\downarrow$ & FID$\downarrow$ & FID$\downarrow$ & FID$\downarrow$ & FID$\downarrow$ \\
ConAbl~\cite{kumari2023ablating}
& 28.80 & 71.71 & 138.72 & 70.30 & 73.10 \\

MACE~\cite{lu2024mace}
& 27.74 & 65.77 & 69.79 & 83.37 & 75.41 \\
SPM~\cite{lyu2024one}
& \underline{24.78} & 62.25 & 32.27 & 58.30 & 61.50 \\
NP~\cite{schramowski2023safe}
& 24.90 & 141.56 & 124.52 & 127.85 & 136.32 \\
SLD~\cite{schramowski2023safe}
& 27.48 & 103.96 & 109.11 & 103.89 & 119.32 \\
AdaVD~\cite{wang2025precise}
& 24.87 & \underline{6.82} & \underline{2.66} & \textbf{8.36} & \underline{6.84} \\
Ours $(\gamma = 0.2)$
& \textbf{23.63} & \textbf{5.86} & \textbf{2.44} & \underline{8.59} & \textbf{4.80} \\
\midrule

\multicolumn{6}{@{}l}{\emph{Erase Picasso}} \\
\cmidrule(lr){1-6}
& FID$\downarrow$ & CS$\downarrow$ & FID$\downarrow$ & FID$\downarrow$ & FID$\downarrow$ \\
ConAbl~\cite{kumari2023ablating}
& 58.62 & 27.72 & 140.34 & 73.35 & 67.44 \\
MACE~\cite{lu2024mace}
& 60.46 & 27.11 & 49.92 & 76.10 & 72.85 \\
SPM~\cite{lyu2024one}
& 38.79 & \underline{26.69} & 7.76 & 52.00 & 51.40 \\
NP~\cite{schramowski2023safe}
& 111.35 & \textbf{26.14} & 91.11 & 116.24 & 121.82 \\
SLD~\cite{schramowski2023safe}
& 98.21 & 27.03 & 93.01 & 97.00 & 110.05 \\
AdaVD~\cite{wang2025precise}
& \underline{5.49} & 26.99 & \underline{2.33} & \underline{9.38} & \underline{7.05} \\
Ours $(\gamma = 0.2)$
& \textbf{5.18} & 26.99 & \textbf{1.69} & \textbf{7.53} & \textbf{4.36} \\
\midrule

\multicolumn{6}{@{}l}{\emph{Erase Monet}} \\
\cmidrule(lr){1-6}
& FID$\downarrow$ & FID$\downarrow$ & CS$\downarrow$ & FID$\downarrow$ & FID$\downarrow$ \\
ConAbl~\cite{kumari2023ablating}
& 141.52 & 132.10 & \underline{24.53} & 208.38 & 186.26 \\

MACE~\cite{lu2024mace}
& 76.90 & 69.35 & 26.89 & 88.35 & 81.72 \\
SPM~\cite{lyu2024one}
& 41.03 & 29.71 & 27.00 & 31.90 & 25.99 \\
NP~\cite{schramowski2023safe}
& 137.21 & 126.75 & \textbf{24.47} & 127.22 & 135.83 \\
SLD~\cite{schramowski2023safe}
& 94.48 & 92.88 & 25.73 & 100.90 & 114.87 \\
AdaVD~\cite{wang2025precise}
& \underline{6.94} & \underline{6.50} & 26.30 & \underline{8.46} & \underline{7.19} \\
Ours $(\gamma = 0.2)$
& \textbf{5.85} & \textbf{5.46} & 24.58 & \textbf{7.21} & \textbf{4.22} \\
\bottomrule
\end{tabular}%
\end{table}

For Monet erasure, CARE reduces target CS from 26.30 to 24.58, close to the strongest erasure baselines, while giving the best preservation across all retained styles. Van Gogh FID decreases from 6.94 to 5.85, Picasso from 6.50 to 5.46, Andy Warhol from 8.46 to 7.21, and Caravaggio from 7.19 to 4.22. Overall, the style results strongly support the central claim of the method: retained-subspace-aware erasure can reduce collateral damage while maintaining competitive target removal.

\subsection{Celebrity Concept Erasure}

\begin{table}[!t]
\centering
\caption{Quantitative comparison of celebrity concept erasure. For the erased concept, CLIP score (CS, lower is better) measures erasure efficacy. For non-target concepts, FID (lower is better) measures prior preservation. Greyed entries are not the primary metric for that block. Best results are shown in bold and second-best results are underlined.}
\label{tab:celebrity_erasure}
\resizebox{\textwidth}{!}{%
\begin{tabular}{@{}l|cc|cc|cc|cc|cc@{}}
\toprule
\textbf{Concept}
& \multicolumn{2}{c|}{\textbf{Bruce Lee}}
& \multicolumn{2}{c|}{\textbf{Marilyn Monroe}}
& \multicolumn{2}{c|}{\textbf{Melania Trump}}
& \multicolumn{2}{c|}{\textbf{Anne Hathaway}}
& \multicolumn{2}{c}{\textbf{Tom Cruise}} \\
& CS & FID & CS & FID & CS & FID & CS & FID & CS & FID \\
\midrule
SD v1.4
& 30.77 & -- & 27.70 & -- & 29.80 & -- & 31.96 & -- & 31.12 & -- \\
\midrule

\multicolumn{11}{@{}c}{\emph{Erase Bruce Lee}} \\
\cmidrule(lr){1-11}
& CS$\downarrow$ & \gray{FID}
& \gray{CS} & FID$\downarrow$
& \gray{CS} & FID$\downarrow$
& \gray{CS} & FID$\downarrow$
& \gray{CS} & FID$\downarrow$ \\
ConAbl~\cite{kumari2023ablating}
& 31.35 & \gray{87.57} & \gray{28.23} & 57.79 & \gray{29.77} & 40.95 & \gray{29.77} & 40.95 & \gray{30.97} & 53.53 \\
MACE~\cite{lu2024mace}
& 25.04 & \gray{131.29} & \gray{28.13} & 74.80 & \gray{30.07} & 68.83 & \gray{31.91} & 75.05 & \gray{28.13} & 71.20 \\
SPM~\cite{lyu2024one}
& 27.75 & \gray{123.67} & \gray{27.71} & 26.89 & \gray{29.81} & 7.83 & \gray{31.96} & 9.46 & \gray{31.13} & 28.54 \\
NP~\cite{schramowski2023safe}
& 24.70 & \gray{150.85} & \gray{26.84} & 102.67 & \gray{28.94} & 82.13 & \gray{30.34} & 89.60 & \gray{29.67} & 89.92 \\
SLD~\cite{schramowski2023safe}
& 28.22 & \gray{102.26} & \gray{26.29} & 87.15 & \gray{29.43} & 84.32 & \gray{30.97} & 85.37 & \gray{29.32} & 94.07 \\
AdaVD~\cite{wang2025precise}
& \underline{20.67} & \gray{138.70} & \gray{27.70} & \underline{6.68} & \gray{29.82} & \underline{5.08} & \gray{31.97} & \underline{6.39} & \gray{31.10} & \underline{13.11} \\
CARE (ours)
& \textbf{18.42} & \gray{135.65} & \gray{27.73} & \textbf{3.29} & \gray{29.65} & \textbf{4.37} & \gray{31.95} & \textbf{5.07} & \gray{31.13} & \textbf{5.89} \\
\midrule

\multicolumn{11}{@{}c}{\emph{Erase Marilyn Monroe}} \\
\cmidrule(lr){1-11}
& \gray{CS} & FID$\downarrow$
& CS$\downarrow$ & \gray{FID}
& \gray{CS} & FID$\downarrow$
& \gray{CS} & FID$\downarrow$
& \gray{CS} & FID$\downarrow$ \\
ConAbl~\cite{kumari2023ablating}
& \gray{30.88} & 66.97 & 28.75 & \gray{88.45} & \gray{29.69} & 51.52 & \gray{32.05} & 58.57 & \gray{31.10} & 54.13 \\
MACE~\cite{lu2024mace}
& \gray{31.30} & 76.23 & \underline{19.52} & \gray{148.34} & \gray{31.93} & 71.05 & \gray{30.16} & 74.90 & \gray{31.52} & 73.06 \\
SPM~\cite{lyu2024one}
& \gray{30.76} & 32.70 & 21.87 & \gray{145.81} & \gray{29.83} & 25.27 & \gray{31.96} & 22.86 & \gray{31.10} & 19.34 \\
NP~\cite{schramowski2023safe}
& \gray{29.50} & 113.12 & 25.86 & \gray{149.95} & \gray{29.29} & 87.27 & \gray{29.42} & 98.86 & \gray{30.02} & 86.70 \\
SLD~\cite{schramowski2023safe}
& \gray{29.59} & 87.83 & 26.70 & \gray{98.51} & \gray{28.81} & 107.42 & \gray{29.25} & 102.13 & \gray{30.35} & 81.12 \\
AdaVD~\cite{wang2025precise}
& \gray{30.73} & \underline{7.88} & 19.87 & \gray{116.94} & \gray{29.80} & \underline{4.46} & \gray{31.93} & \underline{5.43} & \gray{31.13} & \underline{9.33} \\
CARE (ours)
& \gray{30.64} & \textbf{5.92} & \textbf{17.73} & \gray{112.11} & \gray{29.66} & \textbf{4.36} & \gray{31.96} & \textbf{5.01} & \gray{31.15} & \textbf{4.53} \\
\midrule

\multicolumn{11}{@{}c}{\emph{Erase Melania Trump}} \\
\cmidrule(lr){1-11}
& \gray{CS} & FID$\downarrow$
& \gray{CS} & FID$\downarrow$
& CS$\downarrow$ & \gray{FID}
& \gray{CS} & FID$\downarrow$
& \gray{CS} & FID$\downarrow$ \\
ConAbl~\cite{kumari2023ablating}
& \gray{30.62} & 54.46 & \gray{28.14} & 59.10 & 29.89 & \gray{79.04} & \gray{31.94} & 58.65 & \gray{31.00} & 54.50 \\
MACE~\cite{lu2024mace}
& \gray{31.30} & 78.07 & \gray{27.84} & 71.34 & \textbf{20.71} & \gray{122.42} & \gray{31.94} & 73.49 & \gray{31.41} & 71.09 \\
SPM~\cite{lyu2024one}
& \gray{30.79} & 14.08 & \gray{27.63} & 30.40 & 23.12 & \gray{129.68} & \gray{31.86} & 28.85 & \gray{31.10} & 22.35 \\
NP~\cite{schramowski2023safe}
& \gray{29.38} & 115.35 & \gray{27.63} & 103.83 & 23.73 & \gray{131.73} & \gray{28.72} & 106.04 & \gray{30.27} & 106.00 \\
SLD~\cite{schramowski2023safe}
& \gray{29.55} & 90.69 & \gray{26.24} & 93.93 & 25.45 & \gray{103.52} & \gray{28.43} & 104.48 & \gray{30.47} & 88.31 \\
AdaVD~\cite{wang2025precise}
& \gray{30.75} & \underline{7.32} & \gray{27.69} & \underline{6.86} & 23.28 & \gray{96.66} & \gray{31.95} & \textbf{6.52} & \gray{31.08} & \underline{5.74} \\
CARE (ours)
& \gray{30.61} & \textbf{6.33} & \gray{27.70} & \textbf{4.06} & \underline{21.90} & \gray{96.22} & \gray{31.95} & \underline{6.85} & \gray{31.22} & \textbf{4.38} \\
\bottomrule
\end{tabular}%
}
\end{table}

Table~\ref{tab:celebrity_erasure} evaluates celebrity erasure. This setting is challenging because identities share a strong face prior, so removing one identity can easily affect others. CARE performs well in this regime.

When erasing Bruce Lee, CARE improves target erasure from AdaVD's 20.67 to 18.42. It also improves all retained identity FIDs reported in the table: Marilyn Monroe improves from 6.68 to 3.29, Melania Trump from 5.08 to 4.37, Anne Hathaway from 6.39 to 5.07, and Tom Cruise from 13.11 to 5.89. This is a strong example of the benefit of separating target-specific identity from shared face structure.

When erasing Marilyn Monroe, CARE again improves target erasure, reducing CS from 19.87 to 17.73. It also improves preservation on the retained identities: Bruce Lee FID decreases from 7.88 to 5.92, Melania Trump from 4.46 to 4.36, Anne Hathaway from 5.43 to 5.01, and Tom Cruise from 9.33 to 4.53. Compared with methods such as MACE or SPM, which can strongly suppress the target but substantially distort retained identities, CARE provides a more balanced operating point.

For Melania Trump, CARE reduces the target CS from 23.28 to 21.90, while improving most retained FIDs. Bruce Lee improves from 7.32 to 6.33, Marilyn Monroe from 6.86 to 4.06, and Tom Cruise from 5.74 to 4.38. Anne Hathaway slightly worsens from 6.52 to 6.85. This case illustrates the limits of a preservation-oriented operator: when the target is highly entangled with retained identities, CARE still improves the overall trade-off, but does not uniformly dominate every retained probe.

\subsection{Qualitative Analysis}

Figure~\ref{fig:qual_celebrity} and Figure~\ref{fig:qual_style_instance} provide qualitative examples of CARE. Each panel is a $3\times3$ montage, where columns correspond to SD v1.4, AdaVD, and CARE, and rows correspond to the erased target followed by two bystander concepts. These examples use a single template and seed, and are intended to illustrate the same erase--preserve behaviour measured quantitatively in Tables~\ref{tab:celebrity_erasure}, \ref{tab:style_erasure}, and \ref{tab:instance_erasure_detailed}. Across the examples, the target concept is visibly suppressed by the edited methods, while CARE better preserves the visual identity or style of the bystander rows. This is especially apparent for celebrity and style erasure, where the retained concepts share substantial structure with the erased target.

\begin{figure*}[!t]
    \centering
    \includegraphics[width=\textwidth]{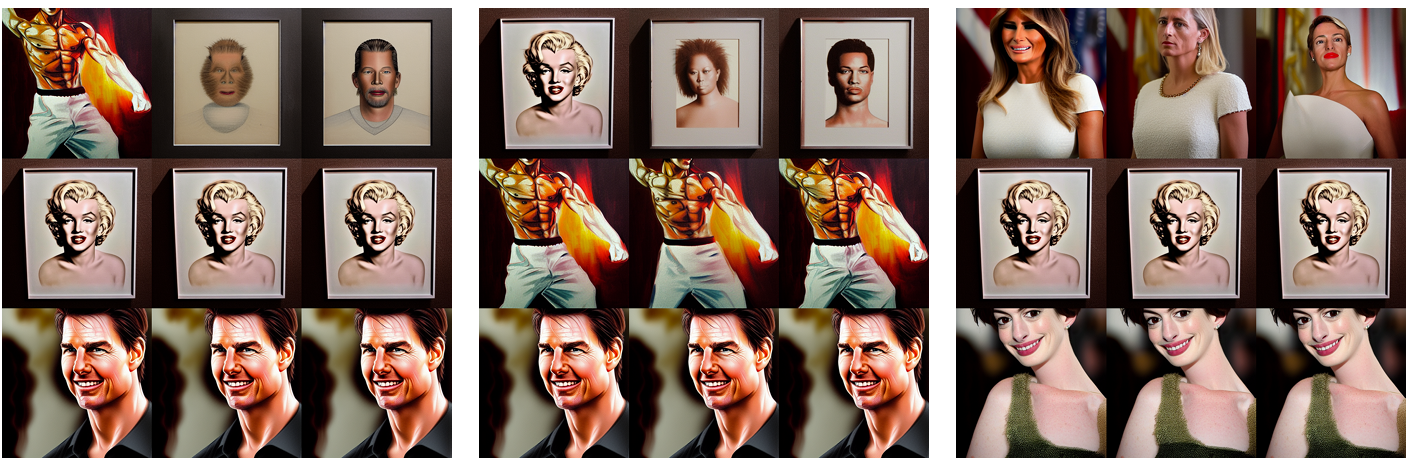}
    \caption{
    Qualitative celebrity erasure. Each $3\times3$ panel shows columns as SD v1.4 original, AdaVD, and CARE, and rows as erased target, bystander 1, and bystander 2. Left: erase Bruce Lee, with Marilyn Monroe and Tom Cruise as bystanders. Centre: erase Marilyn Monroe, with Bruce Lee and Tom Cruise as bystanders. Right: erase Melania Trump, with Marilyn Monroe and Anne Hathaway as bystanders. The edited columns suppress the target identity in the first row, while CARE better preserves the bystander identities, consistent with the FID improvements in Table~\ref{tab:celebrity_erasure}.
    }
    \label{fig:qual_celebrity}
\end{figure*}

\begin{figure*}[!t]
    \centering
    \includegraphics[width=\textwidth]{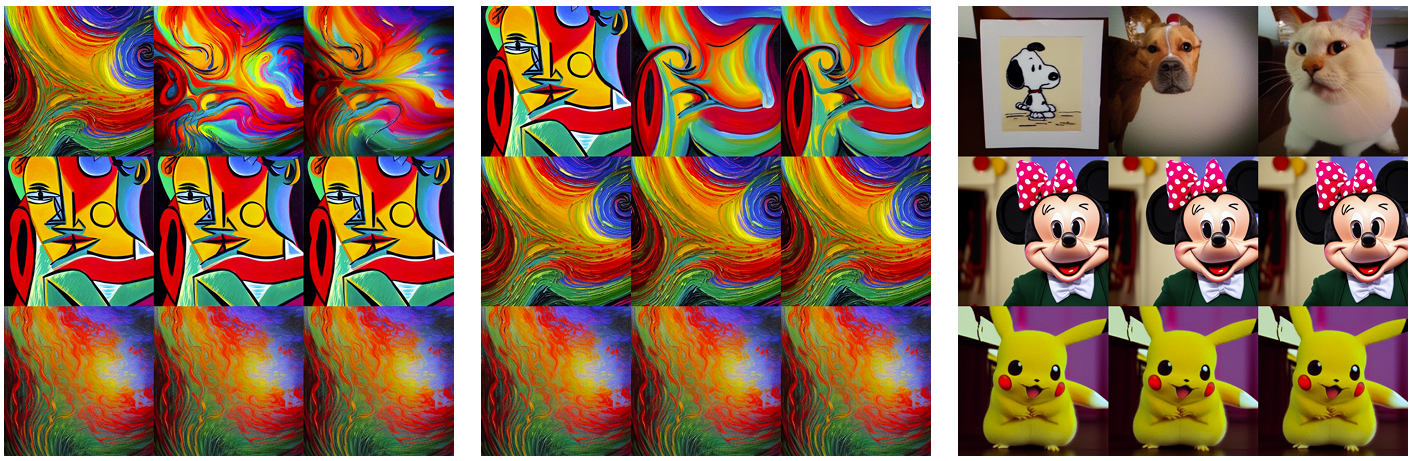}
    \caption{
    Qualitative style and instance erasure. Each $3\times3$ panel shows columns as SD v1.4 original, AdaVD, and CARE, and rows as erased target, bystander 1, and bystander 2. Left: erase Van Gogh style, with Picasso and Monet as bystanders. Centre: erase Picasso style, with Van Gogh and Monet as bystanders. Right: erase Snoopy, with Mickey Mouse and Pikachu as bystanders. CARE suppresses the target style or character while preserving the retained bystanders, matching the preservation trends reported in Tables~\ref{tab:style_erasure} and~\ref{tab:instance_erasure_detailed}.
    }
    \label{fig:qual_style_instance}
\end{figure*}

\subsection{Efficiency}
\begin{table}[!t]
\centering
\caption{
Time consumption of 10-concept erasure. Times are reported in seconds for 10 generated images on one NVIDIA A40, following AdaVD~\cite{wang2025precise}. Baseline rows are transcribed from AdaVD. CARE is closed-form and training-free; instead of absolute A40 timing, we report the measured overhead: a 1.2s offline retained-subspace build, $+0.7\%$ per-image generation overhead, and zero fine-tuning.
}
\label{tab:time_consumption}
\begin{tabular}{@{}l|c|c|c|c@{}}
\toprule
\textbf{Method}
& \textbf{Data Preparation}
& \textbf{Model Finetune}
& \textbf{Image Generation}
& \textbf{Time} \\
\midrule
ConAbl~\cite{kumari2023ablating}
& 9290 & 1120 & 0.9 & 10419 \\
ESD~\cite{gandikota2023erasing}
& -- & -- & -- & -- \\
SPM~\cite{lyu2024one}
& 0 & 72850 & 1.7 & 72867 \\
MACE~\cite{lu2024mace}
& 303 & 232 & 0.9 & 544 \\
NP~\cite{schramowski2023safe}
& -- & -- & -- & -- \\
SLD~\cite{schramowski2023safe}
& 0 & 0 & 1.4 & 14 \\
AdaVD~\cite{wang2025precise}
& 4 & \textbf{0} & 1.8 & 22 \\
CARE (ours)
& 1.2 & \textbf{0} & $+0.7\%$ & 22 \\
\bottomrule
\end{tabular}
\end{table}
Table~\ref{tab:time_consumption} reports the computational cost of CARE compared with prior methods. Training-based methods such as Concept Ablation, SPM, and MACE require model fine-tuning or additional preparation. In contrast, CARE is closed-form and training-free. It has zero model fine-tuning time, matching AdaVD in this respect. The only additional computation is an offline retained-subspace construction, measured at 1.2 seconds, and a small per-image generation overhead of $+0.7\%$. The total runtime is therefore effectively the same as AdaVD, while providing a retained-subspace-aware erasure direction.

\subsection{Ablation Studies}


\noindent\textbf{Effect of the shrinkage parameter.}
The left side of Table~\ref{tab:ablations_gamma_bank} evaluates the shrinkage parameter $\gamma$ on a separable concept, Bruce Lee, and an entangled concept, Melania Trump. Very small values of $\gamma$ over-protect the retained subspace and collapse erasure: for Bruce Lee, $\gamma=0.05$ and $\gamma=0.10$ give CS values near 30, close to the unedited model, even though FID is low. This confirms that preservation alone is not sufficient; the target must still be removed. For Bruce Lee, $\gamma=0.20$ gives the best erasure in the ablation, reducing CS to 17.19 compared with AdaVD's 18.95, while also improving the Tom Cruise probe FID from 11.72 to 6.53. For Melania Trump, which is more entangled with the retained face subspace, larger values of $\gamma$ improve erasure more strongly. This supports the interpretation of $\gamma$ as an erase--preserve dial: smaller values favour preservation, while larger values recover stronger target removal.

\noindent\textbf{Anchor-bank composition.}
The right side of Table~\ref{tab:ablations_gamma_bank} studies the retained anchor bank. The blind disjoint bank gives the strongest overall result among bank types, with CS 17.19 and FID 6.53. Random anchors weaken both erasure and preservation, unrelated anchors especially hurt FID, and related anchors perform worst, increasing CS to 22.88 and FID to 9.98. This is consistent with the method's motivation. If the anchor bank is too close to the target identity, the protected subspace absorbs the target-specific direction and erasure becomes weaker. A useful anchor bank should capture shared non-target structure while remaining disjoint from the target identity.

\begin{table}[!t]
\centering
\caption{
Ablation studies. Left: effect of the shrinkage parameter $\gamma$ on erasure CS$\downarrow$ and probe FID$\downarrow$ for a separable concept, Bruce Lee with Tom Cruise as probe, and an entangled concept, Melania Trump with Anne Hathaway as probe. Right: effect of anchor-bank composition and bank size $M$ for Bruce Lee erasure. All ablations use reduced-scope runs with 15 templates, NS=10, seed 0, and the same evaluation setting. Bold indicates the best result in each column. The star marks the deployed setting.
}
\label{tab:ablations_gamma_bank}

\begin{minipage}[t]{0.56\textwidth}
\centering
\resizebox{\linewidth}{!}{%
\begin{tabular}{@{}l|cc|cc@{}}
\toprule
\multirow{2}{*}{$\gamma$}
& \multicolumn{2}{c|}{\textbf{Bruce Lee}}
& \multicolumn{2}{c}{\textbf{Melania Trump}} \\
& CS$\downarrow$ & FID$\downarrow$
& CS$\downarrow$ & FID$\downarrow$ \\
\midrule
AdaVD $(\gamma \rightarrow \infty)$
& 18.95 & 11.72 & 22.33 & 6.93 \\
\midrule
0.05
& 29.99 & \textbf{4.09} & 29.46 & \textbf{4.21} \\
0.10
& 30.01 & 4.42 & 29.39 & 4.35 \\
0.20$^{\star}$
& \textbf{17.19} & 6.53 & 21.59 & 5.09 \\
0.50
& 17.27 & 7.56 & 21.17 & 5.64 \\
1.00
& 17.65 & 10.06 & \textbf{20.99} & 5.33 \\
\bottomrule
\end{tabular}%
}

\vspace{0.35em}
\footnotesize{(a) Shrinkage sensitivity.}
\end{minipage}
\hfill
\begin{minipage}[t]{0.41\textwidth}
\centering
\resizebox{\linewidth}{!}{%
\begin{tabular}{@{}l|cc||c|cc@{}}
\toprule
\textbf{Bank type}
& CS$\downarrow$ & FID$\downarrow$
& $M$ & CS$\downarrow$ & FID$\downarrow$ \\
\midrule
blind (disjoint) & \textbf{17.19} & \textbf{6.53}
& 2  & 17.96 & 6.13 \\
random                  & 19.02 & 6.85
& 4  & \textbf{16.44} & \textbf{5.72} \\
unrelated               & 19.04 & 8.77
& 6 & 17.19 & 6.53 \\
related                 & 22.88 & 9.98
& 8  & 17.25 & 6.25 \\
                       &       &
& 10 & 17.34 & 5.83 \\
\bottomrule
\end{tabular}%
}

\vspace{0.35em}
\footnotesize{(b) Anchor-bank composition and size.}
\end{minipage}

\end{table}

\noindent\textbf{Anchor-bank size.}
The bank-size ablation shows that CARE is not highly sensitive to the exact number of anchors. With $M=2$, CS is 17.96 and FID is 6.13. For $M \in \{4,6,8,10\}$, CS remains between 16.44 and 17.34, and FID remains between 5.72 and 6.53. We use $M=6$ as a conservative default because it provides stable behaviour with low overhead.

\section{Conclusion}

We presented CARE, a training-free concept-erasure method for text-to-image diffusion models that removes unwanted concepts while reducing collateral damage to non-target generations. CARE is motivated by the observation that the raw target direction used in value-space erasure can mix target-specific identity with visual structure shared by concepts that should be preserved. Instead of erasing along this raw direction, CARE computes a covariance-aware retained-subspace direction from a small bank of kept concept anchors, yielding a closed-form operator with no model fine-tuning and negligible offline overhead. Across instance, style, and celebrity erasure, CARE improves prior preservation while maintaining matched or competitive erasure, with the strongest gains on concepts whose target-specific component is separable from the retained subspace. More entangled cases expose the expected erase--preserve trade-off, which CARE controls through the shrinkage parameter $\gamma$. Overall, CARE reframes concept erasure as choosing the right direction to forget: suppressing specific unwanted concepts without broadly degrading the generative prior.

\FloatBarrier
\bibliographystyle{splncs04}
\bibliography{main}

\clearpage
\appendix

\section{Additional Method Details}
\label{sec:supp_method}

This section provides additional details for CARE that were omitted from the main paper for space: the low-rank Woodbury computation, the hard projection form and its kept-subspace invariance property, and the multi-concept extension.

\subsection{Efficient Woodbury Computation}

The CARE direction is defined in the main paper as
\begin{equation}
    d_j
    =
    \Sigma_{R,j}^{-1}t_j,
    \qquad
    \Sigma_{R,j}
    =
    \frac{1}{M}B_j^\top B_j + \gamma I_D .
    \label{eq:supp_care_direction}
\end{equation}
Directly inverting $\Sigma_{R,j} \in \mathbb{R}^{D \times D}$ is unnecessary and inefficient. Since the number of retained anchors $M$ is small compared with the value dimension $D$, we compute $d_j$ using the Woodbury identity:
\begin{equation}
    d_j =
    \frac{1}{\gamma}
    \left[
    t_j
    -
    \frac{1}{\gamma}
    B_j^{\top}
    \left(
        M I_M + \frac{1}{\gamma} B_j B_j^{\top}
    \right)^{-1}
    B_j t_j
    \right].
    \label{eq:supp_woodbury_direction}
\end{equation}
Thus the only matrix inverse is over an $M \times M$ matrix. The direction $d_j$ is computed once and cached for each edited cross-attention layer and token position. During sampling, CARE therefore has the same form of inference-time update as raw value-space erasure, with $d_j$ replacing $t_j$.

\subsection{Projection Form and Kept-Subspace Invariance}

CARE also admits a hard projection form that clarifies the preservation mechanism. Let $R_j \in \mathbb{R}^{D \times r}$ be an orthonormal basis for the retained subspace obtained from the anchor bank $B_j$. The retained-subspace-orthogonal component of the target is
\begin{equation}
    t^{\perp}_j =
    \left(I_D - R_j R_j^{\top}\right)t_j .
    \label{eq:supp_orthogonal_target}
\end{equation}
Using $t^{\perp}_j$ as the erasure direction yields
\begin{equation}
    v^{\mathrm{proj}}_j
    =
    v_j
    -
    \delta\!\left(\cos(t_j, v_j)\right)
    \frac{\langle t^{\perp}_j, v_j\rangle}
         {\langle t^{\perp}_j, t^{\perp}_j\rangle}
    t^{\perp}_j .
    \label{eq:supp_projection_update}
\end{equation}
This projection form leaves any value vector lying in the retained subspace unchanged. If $w \in \mathrm{span}(R_j)$, then
\begin{equation}
    \langle t^{\perp}_j, w\rangle
    =
    \left\langle
    \left(I_D - R_jR_j^\top\right)t_j,
    w
    \right\rangle
    =
    0 ,
\end{equation}
because $t^{\perp}_j$ lies in the orthogonal complement of $\mathrm{span}(R_j)$. Therefore, the update coefficient in Eq.~\eqref{eq:supp_projection_update} vanishes for any retained-subspace vector $w$. This gives an exact kept-subspace invariance property for the projection form.

The covariance-aware direction used by CARE in the main paper can be viewed as a soft version of this projection. Rather than deleting retained directions outright, it down-weights them according to the retained-anchor covariance. This avoids the brittleness of a hard subspace cutoff while still reducing interference with retained concepts.

\subsection{Multi-Concept Erasure}

CARE extends naturally to multiple target concepts. Suppose we wish to erase a set of target concepts
\begin{equation}
    \mathcal{C} = \{c_1,\ldots,c_K\}.
\end{equation}
For each target concept $c_k$, we record a target value $t_j^{(k)}$ and compute a covariance-aware direction
\begin{equation}
    d_j^{(k)}
    =
    \left(
    \frac{1}{M}B_j^\top B_j + \gamma I_D
    \right)^{-1}
    t_j^{(k)} ,
\end{equation}
using the same retained anchor bank. The multi-concept CARE update is then
\begin{equation}
    v^{\mathrm{CARE}}_j
    =
    v_j
    -
    \sum_{k=1}^{K}
    \delta\!\left(\cos(t_j^{(k)}, v_j)\right)
    \frac{\langle d_j^{(k)}, v_j\rangle}
         {\langle d_j^{(k)}, d_j^{(k)}\rangle}
    d_j^{(k)} .
    \label{eq:supp_multi_concept}
\end{equation}
In practice, the directions can be orthonormalized per token before summation, following standard multi-concept value-space erasure, to reduce redundancy between closely related target concepts.

\subsection{Computational Cost}

CARE does not update the U-Net, the text encoder, or any cross-attention projection weights. The additional computation relative to raw value-space erasure is the offline construction of $d_j$ for each edited layer and token position. Because Eq.~\eqref{eq:supp_woodbury_direction} only requires an $M \times M$ inverse, this cost is small when the retained bank contains few anchors. Once $d_j$ is cached, inference consists of the same gated rank-one subtraction used in standard value-space erasure.

\section{Additional Theoretical Properties}
\label{sec:supp_theory}

We provide a few simple properties of CARE that clarify the behaviour of the proposed operator. These results are not required to implement the method, but they help explain why the covariance-aware direction gives a controllable erase--preserve trade-off.

\subsection{Minimum-Disturbance Interpretation}

The CARE update can be interpreted as the smallest change to a value vector that reduces its component along the erasure direction.

\begin{proposition}[Minimum-disturbance value update]
\label{prop:min_disturbance}
Let $d_j \neq 0$ be a fixed erasure direction and let $\delta \in [0,1]$ be a scalar gate. The CARE update
\begin{equation}
    v'_j
    =
    v_j
    -
    \delta
    \frac{\langle d_j, v_j\rangle}{\langle d_j, d_j\rangle}
    d_j
\end{equation}
is the solution of
\begin{equation}
    \min_{z \in \mathbb{R}^{D}} \|z-v_j\|_2^2
    \quad
    \text{s.t.}
    \quad
    \langle d_j,z\rangle
    =
    (1-\delta)\langle d_j,v_j\rangle .
\end{equation}
In particular, when $\delta=1$, the update is the closest vector to $v_j$ whose component along $d_j$ is completely removed.
\end{proposition}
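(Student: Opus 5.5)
This is a Euclidean projection of $v_j$ onto an affine hyperplane, and I would prove it directly by orthogonal decomposition. Any feasible $z$ lies in the hyperplane $H=\{z:\langle d_j,z\rangle=(1-\delta)\langle d_j,v_j\rangle\}$, which is nonempty and closed because $d_j\neq 0$. The objective $\|z-v_j\|_2^2$ is strictly convex, so a minimizer over $H$ exists and is unique. It is therefore enough to show that $v'_j$ is feasible and that it beats every other feasible point.

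\emph{Feasibility.} I will take the inner product of $v'_j$ with $d_j$:
\[
\langle d_j,v'_j\rangle=\langle d_j,v_j\rangle-\delta\frac{\langle d_j,v_j\rangle}{\langle d_j,d_j\rangle}\langle d_j,d_j\rangle=(1-\delta)\langle d_j,v_j\rangle .
\]
So $v'_j\in H$.

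\emph{Optimality.} Take any $z\in H$ and write $z=v'_j+u$. Since both $z$ and $v'_j$ lie in $H$, the difference satisfies $\langle d_j,u\rangle=0$. The displacement $v'_j-v_j$ is a scalar multiple of $d_j$, so it is orthogonal to $u$. The Pythagorean identity then gives
\[
\|z-v_j\|_2^2=\|v'_j-v_j\|_2^2+\|u\|_2^2\ge\|v'_j-v_j\|_2^2 ,
\]
with equality if and only if $u=0$. This shows that $v'_j$ is the unique minimizer.

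As a cross-check, one could instead use a Lagrangian. Stationarity gives $z-v_j=\lambda d_j$, and solving the constraint for $\lambda$ yields $\lambda=-\delta\langle d_j,v_j\rangle/\langle d_j,d_j\rangle$, which recovers the same update. The Pythagorean argument is cleaner and also delivers uniqueness, so I will present that one.

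The case $\delta=1$ follows by substitution. The constraint becomes $\langle d_j,z\rangle=0$, so $v'_j$ is the closest vector to $v_j$ with no component along $d_j$.

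No step is a real obstacle; the only points needing care are the following.
\begin{itemize}
\item The hypothesis $d_j\neq0$ is what makes the division and the hyperplane well-defined.
\item The restriction $\delta\in[0,1]$ is not actually used; the result holds for any real $\delta$.
\item In the CARE setting $\delta$ is the gate value $\delta(\cos(t_j,v_j))$, which depends on $v_j$. The proposition treats it as a fixed scalar, so the minimum-disturbance statement holds pointwise for each $v_j$, with $\delta$ evaluated at that $v_j$.
\end{itemize}
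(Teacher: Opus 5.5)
Your proof is correct, and it is more complete than the paper's.

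The paper argues by ansatz. It asserts that the closest feasible point must lie on the line $v_j-\alpha d_j$, then solves the constraint for the scalar $\alpha$. This is essentially your Lagrangian cross-check, with the condition $z-v_j\in\mathrm{span}(d_j)$ taken for granted rather than derived.

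Your main argument runs the other way. You verify that $v'_j$ is feasible, then write $z-v_j=(v'_j-v_j)+u$ with $u\perp d_j$ to get $\|z-v_j\|_2^2=\|v'_j-v_j\|_2^2+\|u\|_2^2$ for every feasible $z$. This supplies the step the paper leaves unjustified, namely why the minimizer lies on the normal line. It also gives uniqueness directly, so your preliminary appeal to existence and strict convexity is redundant, though harmless.

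The paper's version is shorter and shows how the formula is found; yours is self-contained and actually proves optimality.

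Your side remarks are also right. The restriction $\delta\in[0,1]$ plays no role. Since the gate $\delta(\cos(t_j,v_j))$ depends on $v_j$, the proposition is a pointwise statement with $\delta$ fixed at its value for the given $v_j$.
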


\begin{proof}
The constraint requires the edited value $z$ to have a prescribed inner product with $d_j$. The closest point to $v_j$ satisfying this affine constraint must lie on the line $v_j-\alpha d_j$. Substituting $z=v_j-\alpha d_j$ into the constraint gives
\begin{equation}
    \langle d_j,v_j\rangle
    -
    \alpha \langle d_j,d_j\rangle
    =
    (1-\delta)\langle d_j,v_j\rangle .
\end{equation}
Solving for $\alpha$ yields
\begin{equation}
    \alpha
    =
    \delta
    \frac{\langle d_j,v_j\rangle}{\langle d_j,d_j\rangle},
\end{equation}
which gives the CARE update.
\end{proof}

\subsection{Relation to Raw Value-Space Erasure}

CARE recovers standard value-space erasure as a limiting case when the retained covariance is suppressed.

\begin{proposition}[Raw erasure as a limiting case]
\label{prop:raw_limit}
Let
\begin{equation}
    d_j(\gamma)
    =
    \left(
    \frac{1}{M}B_j^\top B_j + \gamma I_D
    \right)^{-1}t_j .
\end{equation}
As $\gamma \rightarrow \infty$, the CARE update converges to the raw value-space erasure update using direction $t_j$.
\end{proposition}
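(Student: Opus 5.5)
The plan rests on one observation: the CARE update depends on $d_j$ only through the rank-one projector $P(d)=dd^\top/\langle d,d\rangle$. This projector is invariant under nonzero rescaling of $d$. So it suffices to show that the rescaled direction $\gamma\, d_j(\gamma)$ converges to $t_j$, and then pass the limit through the continuous map $d\mapsto P(d)$ on $\mathbb{R}^D\setminus\{0\}$. We assume $t_j\neq 0$ so that the raw update is defined. The gate $\delta(\cos(t_j,v_j))$ depends on the raw target alone, not on $\gamma$, so it is unchanged in the limit.

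First, write $S_j=\frac{1}{M}B_j^\top B_j$, which is symmetric positive semidefinite. Then
\[
\gamma\, d_j(\gamma)=\gamma(S_j+\gamma I_D)^{-1}t_j=\left(I_D+\gamma^{-1}S_j\right)^{-1}t_j .
\]
Matrix inversion is continuous at $I_D$, and $\gamma^{-1}S_j\to 0$, so this tends to $t_j$. A quantitative bound is also available from the spectral decomposition of $S_j$ with eigenvalues $\lambda_i\ge 0$:
\[
\left\|\gamma\, d_j(\gamma)-t_j\right\|_2\le \frac{\|S_j\|_2}{\gamma}\,\|t_j\|_2 .
\]
Each eigencomponent is scaled by $\gamma/(\gamma+\lambda_i)=1-\lambda_i/(\gamma+\lambda_i)$. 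Equivalently, one can read this off the Woodbury form~\eqref{eq:supp_woodbury_direction}, whose correction term carries an extra factor $1/\gamma$ multiplying a bounded quantity.

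Second, using scale invariance, the CARE update equals
\[
v_j-\delta\,\frac{\langle \tilde d,v_j\rangle}{\langle \tilde d,\tilde d\rangle}\tilde d,\qquad \tilde d=\gamma\, d_j(\gamma).
\]
Since $\tilde d\to t_j\neq 0$, the denominator stays bounded away from zero for large $\gamma$. Continuity then gives convergence to the raw update~\eqref{eq:raw_value_erasure}. This convergence is uniform for $v_j$ in bounded sets, and it carries an explicit $O(1/\gamma)$ rate from the bound above.

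The only delicate point is recognizing that $d_j(\gamma)$ itself tends to $0$, so one cannot pass to the limit in $d_j$ directly. The normalization step is what makes the statement meaningful, and I expect the main care to be needed in stating that step and justifying that $\tilde d$ is nonzero for large $\gamma$. The rest is routine continuity.
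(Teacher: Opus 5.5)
Your proposal is correct and takes essentially the same route as the paper. Both arguments establish $d_j(\gamma)=\frac{1}{\gamma}t_j+O(\gamma^{-2})$ and then use the invariance of the update under nonzero rescaling of the direction; you phrase the first step as $\gamma\,d_j(\gamma)=(I_D+\gamma^{-1}S_j)^{-1}t_j\to t_j$ with an explicit $O(1/\gamma)$ bound. Your extra points (that $d_j(\gamma)$ itself tends to $0$, that $\tilde d$ stays bounded away from zero, that the rank-one projector is continuous on $\mathbb{R}^D\setminus\{0\}$, and that the gate does not depend on $\gamma$) make explicit the final limiting step that the paper's proof leaves implicit.
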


\begin{proof}
For large $\gamma$,
\begin{equation}
    \left(
    \frac{1}{M}B_j^\top B_j + \gamma I_D
    \right)^{-1}
    =
    \frac{1}{\gamma}I_D + O(\gamma^{-2}).
\end{equation}
Therefore,
\begin{equation}
    d_j(\gamma)
    =
    \frac{1}{\gamma}t_j + O(\gamma^{-2}).
\end{equation}
The CARE update is invariant to non-zero rescaling of the erasure direction, since
\begin{equation}
    \frac{\langle \alpha d_j,v_j\rangle}
         {\langle \alpha d_j,\alpha d_j\rangle}
    \alpha d_j
    =
    \frac{\langle d_j,v_j\rangle}
         {\langle d_j,d_j\rangle}
    d_j
\end{equation}
for any $\alpha \neq 0$. Hence, as $\gamma \rightarrow \infty$, CARE becomes equivalent to erasing along $t_j$.
\end{proof}

\subsection{Hard-Projection Limit and Retained-Subspace Invariance}

CARE also approaches a hard projection away from the retained-anchor subspace when the shrinkage parameter becomes small.

\begin{proposition}[Hard-projection limit]
\label{prop:hard_projection_limit}
Let $R_j \in \mathbb{R}^{D \times r}$ be an orthonormal basis for the subspace spanned by the retained anchors, and define
\begin{equation}
    t_j^\perp
    =
    (I_D - R_jR_j^\top)t_j .
\end{equation}
If $t_j^\perp \neq 0$, then as $\gamma \rightarrow 0$, the CARE direction becomes equivalent, up to scale, to $t_j^\perp$.
\end{proposition}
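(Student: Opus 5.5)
The plan is to diagonalize $\frac{1}{M}B_j^\top B_j$ and read off the limit directly. Write the symmetric positive semidefinite matrix $S_j = \frac{1}{M}B_j^\top B_j$ via its eigendecomposition $S_j = R_j \Lambda_j R_j^\top$, where $\Lambda_j = \mathrm{diag}(\lambda_1,\dots,\lambda_r)$ with all $\lambda_i>0$. The first step is to check that the columns of $R_j$ can be taken to be the orthonormal basis in the statement. This holds because the range of $S_j$ equals the row space of $B_j$, which is exactly the span of the retained anchors $b_{j,1},\dots,b_{j,M}$. The kernel of $S_j$ is the orthogonal complement, with projector $I_D - R_jR_j^\top$. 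Since the update in Proposition~\ref{prop:min_disturbance} depends on $R_j$ only through the projector $R_jR_j^\top$, the particular choice of basis does not matter.

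With this decomposition,
\begin{equation}
    \Sigma_{R,j}^{-1}
    =
    R_j(\Lambda_j+\gamma I_r)^{-1}R_j^\top
    +
    \frac{1}{\gamma}\left(I_D - R_jR_j^\top\right),
\end{equation}
and therefore
\begin{equation}
    \gamma\, d_j(\gamma)
    =
    t_j^\perp
    +
    \gamma\, R_j(\Lambda_j+\gamma I_r)^{-1}R_j^\top t_j .
\end{equation}
Because every $\lambda_i>0$, the matrix $(\Lambda_j+\gamma I_r)^{-1}$ converges to $\Lambda_j^{-1}$ as $\gamma\to 0$, so it stays bounded. The second term is then $O(\gamma)$, which gives $\gamma d_j(\gamma)\to t_j^\perp$. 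This is the only step that requires care, and it is exactly where the restriction to the nonzero eigenvalues is needed. If one instead expanded the inverse naively, the small eigenvalues would appear to blow up, so the precise point is that the retained-subspace component is bounded by $\|\Lambda_j^{-1}\|\,\|t_j\|$ while the orthogonal component carries the $1/\gamma$ factor.

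To finish, I invoke the scale invariance already shown in the proof of Proposition~\ref{prop:raw_limit}: the update is unchanged when $d_j$ is replaced by $\alpha d_j$ for any $\alpha\neq 0$. So CARE with direction $d_j(\gamma)$ coincides with CARE using $\tilde d_j(\gamma)=\gamma d_j(\gamma)$, and $\tilde d_j(\gamma)\to t_j^\perp\neq 0$. The map $d\mapsto \frac{\langle d,v_j\rangle}{\langle d,d\rangle}d$ is continuous on $d\neq 0$, so the CARE update converges to the projection update of Eq.~\eqref{eq:supp_projection_update}. The hypothesis $t_j^\perp\neq 0$ is needed precisely here, to keep the limit direction away from the singularity at $d=0$.

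As a corollary, the kept-subspace invariance established after Eq.~\eqref{eq:supp_projection_update} holds in the limit. For finite $\gamma$, any $w\in\mathrm{span}(R_j)$ has update coefficient $O(\gamma)$, since $\langle \tilde d_j(\gamma), w\rangle = \gamma\langle R_j(\Lambda_j+\gamma I_r)^{-1}R_j^\top t_j, w\rangle$.
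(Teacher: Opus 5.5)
Your proof is correct and takes the same route as the paper: you eigendecompose $\frac{1}{M}B_j^\top B_j$, split $\Sigma_{R,j}^{-1}$ into a bounded retained-subspace part $R_j(\Lambda_j+\gamma I_r)^{-1}R_j^\top$ and a $\frac{1}{\gamma}(I_D-R_jR_j^\top)$ part, and conclude using the scale invariance of the update. You also make rigorous three steps the paper only asserts: that the eigenbasis spans exactly the anchor span, that $\gamma d_j(\gamma)\to t_j^\perp$ because $(\Lambda_j+\gamma I_r)^{-1}$ stays bounded, and that continuity of the update map on $d\neq 0$ carries the limit of the direction over to the update itself.
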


\begin{proof}
Let the eigendecomposition of $\frac{1}{M}B_j^\top B_j$ be
\begin{equation}
    \frac{1}{M}B_j^\top B_j
    =
    U_r \Lambda U_r^\top ,
\end{equation}
where $U_r$ spans the retained-anchor subspace and $\Lambda$ contains the non-zero eigenvalues. Decompose the target as
\begin{equation}
    t_j = U_rU_r^\top t_j + t_j^\perp .
\end{equation}
Then
\begin{equation}
    d_j(\gamma)
    =
    U_r(\Lambda+\gamma I)^{-1}U_r^\top t_j
    +
    \frac{1}{\gamma}t_j^\perp .
\end{equation}
As $\gamma \rightarrow 0$, the orthogonal component $\frac{1}{\gamma}t_j^\perp$ dominates whenever $t_j^\perp \neq 0$. Since the value update is invariant to rescaling of $d_j$, the effective erasure direction converges to $t_j^\perp$.
\end{proof}

\begin{corollary}[Retained-subspace invariance]
\label{cor:retained_invariance}
In the hard-projection form, any value vector $w \in \mathrm{span}(R_j)$ is left unchanged by the erasure update.
\end{corollary}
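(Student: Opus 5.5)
The corollary follows by combining Proposition~\ref{prop:hard_projection_limit} with the direct orthogonality computation from the supplementary subsection on the projection form. The hard-projection form is the update in Eq.~\eqref{eq:supp_projection_update}, with erasure direction $t_j^\perp=(I_D-R_jR_j^\top)t_j$. By Proposition~\ref{prop:hard_projection_limit}, this is the $\gamma\to 0$ limit of CARE up to scale. Because the update is invariant to rescaling of the direction (shown in the proof of Proposition~\ref{prop:raw_limit}), it suffices to analyse the update with $t_j^\perp$ itself.

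Fix $w\in\mathrm{span}(R_j)$, so $w=R_ja$ for some $a\in\mathbb{R}^r$. The key step is to show that the erasure coefficient vanishes. Since $I_D-R_jR_j^\top$ is symmetric and $R_j^\top R_j=I_r$,
\begin{equation*}
\langle t_j^\perp,w\rangle=\langle t_j,(I_D-R_jR_j^\top)R_ja\rangle=\langle t_j,R_ja-R_ja\rangle=0 .
\end{equation*}
The subtracted term is therefore $\delta(\cos(t_j,w))\cdot 0\cdot t_j^\perp/\|t_j^\perp\|^2=0$, whatever the gate value, and hence $v^{\mathrm{proj}}=w$.

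The only delicate point is well-definedness: the update divides by $\langle t_j^\perp,t_j^\perp\rangle$. I would inherit the hypothesis $t_j^\perp\neq 0$ from Proposition~\ref{prop:hard_projection_limit}. If instead $t_j^\perp=0$, the natural convention is to apply no update, and the claim then holds trivially.

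For the multi-concept update in Eq.~\eqref{eq:supp_multi_concept} used in hard-projection form, the same computation applies term by term. Each $t_j^{(k)\perp}$ is orthogonal to $\mathrm{span}(R_j)$ because the retained bank is shared across all $k$. Orthonormalizing these directions keeps them inside $\mathrm{span}(R_j)^\perp$, so invariance is preserved in that case as well.
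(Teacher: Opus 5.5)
Your argument is correct and matches the paper's: $t_j^\perp=(I_D-R_jR_j^\top)t_j$ lies in $\mathrm{span}(R_j)^\perp$, so $\langle t_j^\perp,w\rangle=0$ for every $w\in\mathrm{span}(R_j)$, and the update coefficient vanishes whatever the gate value. The detour through Proposition~\ref{prop:hard_projection_limit} and scale invariance is unnecessary, since the hard-projection form is defined directly with $t_j^\perp$; your $t_j^\perp\neq 0$ well-definedness remark and the multi-concept extension are sound additions that the paper does not spell out.
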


\begin{proof}
The hard-projection direction is $t_j^\perp=(I_D-R_jR_j^\top)t_j$, which lies in the orthogonal complement of $\mathrm{span}(R_j)$. Therefore, for any $w \in \mathrm{span}(R_j)$,
\begin{equation}
    \langle t_j^\perp,w\rangle = 0 .
\end{equation}
The projection coefficient in the erasure update is therefore zero, so the update leaves $w$ unchanged.
\end{proof}

\begin{table}[!t]
\centering
\caption{
Detailed quantitative comparison of single- and multi-instance concept erasure. For erased concepts, CLIP score (CS, lower is better) measures erasure efficacy. For non-target concepts, FID (lower is better) measures prior preservation. Greyed entries are not the primary metric for that block. Best results are shown in bold and second-best results are underlined.
}
\label{tab:instance_erasure_detailed}
\resizebox{\textwidth}{!}{%
\begin{tabular}{@{}l|cc|cc|cc|cc|cc|cc@{}}
\toprule
\textbf{Method}
& \multicolumn{2}{c|}{\textbf{Snoopy}}
& \multicolumn{2}{c|}{\textbf{Mickey}}
& \multicolumn{2}{c|}{\textbf{Spongebob}}
& \multicolumn{2}{c|}{\textbf{Pikachu}}
& \multicolumn{2}{c|}{\textbf{Dog}}
& \multicolumn{2}{c}{\textbf{Legislator}} \\
& CS & FID & CS & FID & CS & FID & CS & FID & CS & FID & CS & FID \\
\midrule
SD v1.4
& 28.49 & -- & 26.50 & -- & 27.30 & -- & 27.41 & -- & 24.27 & -- & 23.73 & -- \\
\midrule

\multicolumn{13}{@{}c}{\emph{Erase Snoopy}} \\
\cmidrule(lr){1-13}
& CS$\downarrow$ & \gray{FID}
& \gray{CS} & FID$\downarrow$
& \gray{CS} & FID$\downarrow$
& \gray{CS} & FID$\downarrow$
& \gray{CS} & FID$\downarrow$
& \gray{CS} & FID$\downarrow$ \\
ConAbl~\cite{kumari2023ablating}
& 25.38 & \gray{103.80} & \gray{26.80} & 38.44 & \gray{27.02} & 41.59 & \gray{27.57} & 29.68 & \gray{24.32} & 27.76 & \gray{23.48} & 27.36 \\
MACE~\cite{lu2024mace}
& 20.78 & \gray{169.22} & \gray{22.95} & 118.01 & \gray{23.33} & 111.90 & \gray{25.77} & 81.99 & \gray{23.96} & 43.27 & \gray{22.25} & 65.97 \\
SPM~\cite{lyu2024one}
& 23.89 & \gray{122.63} & \gray{26.60} & 33.06 & \gray{27.12} & 34.70 & \gray{27.51} & 23.89 & \gray{24.24} & 19.61 & \gray{23.70} & 18.26 \\
NP~\cite{schramowski2023safe}
& 23.66 & \gray{125.98} & \gray{26.14} & 59.58 & \gray{26.66} & 78.74 & \gray{27.36} & 52.37 & \gray{23.89} & 67.51 & \gray{22.16} & 55.22 \\
SLD~\cite{schramowski2023safe}
& 27.84 & \gray{64.78} & \gray{26.46} & 48.12 & \gray{27.52} & 55.36 & \gray{27.33} & 38.74 & \gray{24.03} & 41.95 & \gray{22.80} & 49.08 \\
AdaVD~\cite{wang2025precise}
& \underline{20.28} & \gray{120.46} & \gray{26.53} & \underline{5.72} & \gray{27.25} & \underline{8.56} & \gray{27.40} & \underline{5.79} & \gray{24.27} & \underline{2.32} & \gray{23.77} & \underline{6.07} \\
CARE (ours)
& \textbf{19.42} & \gray{108.95} & \gray{23.92} & \textbf{4.43} & \gray{22.74} & \textbf{7.42} & \gray{25.86} & \textbf{4.74} & \gray{24.24} & \textbf{1.95} & \gray{22.14} & \textbf{5.32} \\
\midrule

\multicolumn{13}{@{}c}{\emph{Erase Snoopy and Mickey}} \\
\cmidrule(lr){1-13}
& CS$\downarrow$ & \gray{FID}
& CS$\downarrow$ & \gray{FID}
& \gray{CS} & FID$\downarrow$
& \gray{CS} & FID$\downarrow$
& \gray{CS} & FID$\downarrow$
& \gray{CS} & FID$\downarrow$ \\
ConAbl~\cite{kumari2023ablating}
& 24.26 & \gray{119.96} & 24.08 & \gray{96.94} & \gray{27.02} & 46.32 & \gray{27.75} & 39.63 & \gray{23.98} & 30.57 & \gray{23.33} & 27.49 \\
MACE~\cite{lu2024mace}
& 20.74 & \gray{171.16} & \underline{20.71} & \gray{140.50} & \gray{25.87} & 51.49 & \gray{25.87} & 110.67 & \gray{23.82} & 52.07 & \gray{21.70} & 77.13 \\
SPM~\cite{lyu2024one}
& 23.16 & \gray{128.08} & 22.81 & \gray{115.02} & \gray{26.92} & 41.58 & \gray{27.45} & 31.77 & \gray{24.13} & 21.96 & \gray{23.60} & 23.69 \\
NP~\cite{schramowski2023safe}
& 23.59 & \gray{124.10} & 24.85 & \gray{83.68} & \gray{26.69} & 81.41 & \gray{27.27} & 50.10 & \gray{23.62} & 65.93 & \gray{23.84} & 58.88 \\
SLD~\cite{schramowski2023safe}
& 27.76 & \gray{59.97} & 26.74 & \gray{50.16} & \gray{27.53} & 54.59 & \gray{27.29} & 39.24 & \gray{23.97} & 41.62 & \gray{22.66} & 50.13 \\
AdaVD~\cite{wang2025precise}
& \underline{20.29} & \gray{121.12} & \textbf{19.93} & \gray{108.22} & \gray{27.27} & \underline{9.34} & \gray{27.42} & \underline{5.84} & \gray{24.26} & \underline{2.41} & \gray{23.73} & \underline{6.43} \\
CARE (ours)
& \textbf{19.45} & \gray{108.77} & 22.55 & \gray{83.35} & \gray{27.24} & \textbf{7.47} & \gray{27.87} & \textbf{4.98} & \gray{24.12} & \textbf{2.18} & \gray{23.24} & \textbf{5.34} \\
\midrule

\multicolumn{13}{@{}c}{\emph{Erase Snoopy and Mickey and Spongebob}} \\
\cmidrule(lr){1-13}
& CS$\downarrow$ & \gray{FID}
& CS$\downarrow$ & \gray{FID}
& CS$\downarrow$ & \gray{FID}
& \gray{CS} & FID$\downarrow$
& \gray{CS} & FID$\downarrow$
& \gray{CS} & FID$\downarrow$ \\
ConAbl~\cite{kumari2023ablating}
& 23.94 & \gray{126.70} & 23.64 & \gray{105.07} & 25.04 & \gray{108.67} & \gray{27.76} & 51.20 & \gray{23.83} & 31.59 & \gray{23.17} & 30.03 \\
MACE~\cite{lu2024mace}
& 20.48 & \gray{172.80} & \underline{20.50} & \gray{143.66} & 21.59 & \gray{120.87} & \gray{24.38} & 99.68 & \gray{23.70} & 47.46 & \gray{21.74} & 70.38 \\
SPM~\cite{lyu2024one}
& 22.81 & \gray{133.06} & 22.35 & \gray{121.85} & \underline{20.82} & \gray{152.72} & \gray{27.45} & 39.83 & \gray{24.10} & 22.68 & \gray{23.52} & 25.31 \\
NP~\cite{schramowski2023safe}
& 24.29 & \gray{129.75} & 24.76 & \gray{89.74} & 25.31 & \gray{106.30} & \gray{27.28} & 64.75 & \gray{23.55} & 65.10 & \gray{21.63} & 59.33 \\
SLD~\cite{schramowski2023safe}
& 27.84 & \gray{58.16} & 26.71 & \gray{49.70} & 27.60 & \gray{54.61} & \gray{27.35} & 39.41 & \gray{23.90} & 42.32 & \gray{22.46} & 49.88 \\
AdaVD~\cite{wang2025precise}
& \textbf{19.39} & \gray{124.49} & \textbf{19.73} & \gray{112.97} & 20.34 & \gray{118.47} & \gray{27.42} & \underline{6.85} & \gray{24.27} & \underline{2.79} & \gray{23.76} & \underline{7.26} \\
CARE (ours)
& \underline{19.45} & \gray{108.67} & 22.56 & \gray{84.47} & \textbf{18.40} & \gray{81.82} & \gray{27.14} & \textbf{6.83} & \gray{24.23} & \textbf{2.27} & \gray{21.54} & \textbf{5.55} \\
\bottomrule
\end{tabular}%
}
\end{table}

\section{Additional Instance Erasure Results}
\label{sec:supp_instance_detailed}

Table~\ref{tab:instance_erasure_detailed} provides the full instance-erasure results corresponding to the compact comparison in the main paper. Unlike the main table, which reports only the primary metric for each concept in each erasure setting, this table includes both CLIP score (CS) and FID for every concept. This makes it possible to inspect both sides of the erase--preserve trade-off: for erased concepts, lower CS indicates stronger removal, while for non-target concepts, lower FID indicates better preservation of the original model's generation behaviour.

Greyed entries are included for completeness but are not the primary metric for that block. For example, when erasing Snoopy, the Snoopy CS column measures erasure efficacy, while the FID columns for Mickey, Spongebob, Pikachu, Dog, and Legislator measure preservation. Conversely, the Snoopy FID and the non-target CS columns are secondary diagnostic values and are therefore greyed out. This layout follows the same interpretation used in the main paper and helps avoid comparing metrics that are not meaningful for the corresponding target/non-target role.

Overall, the detailed results support the trends reported in the main paper. CARE improves single-concept Snoopy erasure while reducing FID for all retained concepts. In the multi-concept settings, CARE consistently improves retained-concept FID over AdaVD, although Mickey erasure remains weaker when Mickey is erased jointly with other targets. These results illustrate the intended behaviour of CARE: it shifts the operating point toward stronger prior preservation while maintaining competitive target removal.

\section{Discussion}
\label{sec:supp_discussion}

The results highlight the intended behaviour of CARE: it improves the specificity of value-space concept erasure by changing the direction of removal rather than simply increasing the strength of suppression. Across instance, style, and celebrity erasure, CARE consistently reduces changes to retained concepts while maintaining competitive target removal. This is especially clear in settings where the target has a distinguishable component relative to the retained anchors, such as Snoopy, art styles, Bruce Lee, and Marilyn Monroe. In these cases, the retained-anchor covariance helps separate target-specific information from shared visual structure, allowing CARE to suppress the target without unnecessarily disturbing nearby concepts.

The multi-concept and identity-erasure settings further illustrate the role of the erase--preserve trade-off. When target concepts are visually close to retained concepts, or when multiple related targets are erased jointly, a purely aggressive erasure direction can improve target suppression at the cost of broader prior degradation. CARE instead favours a more preservation-aware operating point. This is why some settings show a small reduction in erasure strength for one target while improving retained-concept FID across the remaining probes. Rather than treating this as a failure mode, we view it as the central advantage of the method: CARE exposes a controllable mechanism for deciding how much of the target direction should be removed once shared retained structure is accounted for.

This behaviour is consistent with the design of the operator. Standard value-space erasure removes the component of each value vector along the raw target direction, implicitly assuming that this direction contains only the concept to be erased. CARE relaxes this assumption by estimating which components of the target direction overlap with retained concepts and down-weighting those components before applying the value update. The result is a closed-form, training-free erasure method that preserves the practical efficiency of AdaVD-style value editing while adding an explicit mechanism for prior preservation.

Overall, CARE should be understood as a more selective value-space erasure operator. It does not require model fine-tuning, edited checkpoints, or additional trainable modules, and it introduces only a small offline computation to construct the retained-subspace-aware direction. Its empirical gains are therefore most meaningful in deployment settings where concept erasure must be fast, lightweight, and local, but where preserving semantically related non-target concepts is as important as removing the target itself.

\end{document}